\newtheorem{theorem}{Theorem}
\newtheorem{proof}{Proof}
\journal{Nuclear Physics B}
\begin{document}

\begin{frontmatter}

%% Title, authors and addresses

%% use the tnoteref command within \title for footnotes;
%% use the tnotetext command for the associated footnote;
%% use the fnref command within \author or \address for footnotes;
%% use the fntext command for the associated footnote;
%% use the corref command within \author for corresponding author footnotes;
%% use the cortext command for the associated footnote;
%% use the ead command for the email address,
%% and the form \ead[url] for the home page:
%%
%% \title{Title\tnoteref{label1}}
%% \tnotetext[label1]{}
%% \author{Name\corref{cor1}\fnref{label2}}
%% \ead{email address}
%% \ead[url]{home page}
%% \fntext[label2]{}
%% \cortext[cor1]{}
%% \address{Address\fnref{label3}}
%% \fntext[label3]{}
 \author{A.Lotsi\corref{cor}}
 \ead{a.lotsi@rug.nl}
 
 \author{E.Wit}
 %\ead{e.c.wit@rug.nl}
 
 \ead[url]{http://www.math.rug.nl/stat/Main/HomePage}
 %\fntext[label2]{}
 %\cortext[cor1]{}
 \cortext[cor]{Corresponding author} 
 \address{Department of Statistics and Probability, Johann Bernoulli Institute University of Groningen Nijenborgh 9, 9747 AG Groningen The Netherlands}

\title{High dimensional Sparse  Gaussian Graphical Mixture Model}

%% use optional labels to link authors explicitly to addresses:
%% \author[label1,label2]{<author name>}
%% \address[label1]{<address>}
%% \address[label2]{<address>}

%\author{}

%\address{}

\begin{abstract}
This paper considers the problem of networks reconstruction from  heterogeneous data using a Gaussian Graphical Mixture Model (GGMM). It is well known that parameter estimation  in this context is challenging due to large numbers of variables coupled with the degenerate  nature  of the likelihood. We propose as a solution a penalized maximum likelihood technique by imposing an $l_{1}$ penalty on the precision matrix.  Our approach shrinks the parameters thereby resulting in better identifiability and variable selection.  We use the Expectation Maximization (EM) algorithm which involves  the graphical LASSO to estimate the mixing coefficients and the precision matrices. We  show that under certain regularity conditions  the Penalized Maximum Likelihood (PML) estimates are consistent.   We demonstrate the performance of the PML estimator through simulations and we show the utility of our method for high dimensional data analysis in a genomic application.

\end{abstract}

\begin{keyword}
%% keywords here, in the form: keyword \sep keyword
Graphical \sep Mixture \sep Lasso \sep Expectation Maximization
%% MSC codes here, in the form: \MSC code \sep code
%% or \MSC[2008] code \sep code (2000 is the default)

\end{keyword}

\end{frontmatter}

% \linenumbers

%% main text

\section{Introduction}	
 Networks reconstruction has become an attractive paradigm of genomic science. Suppose we have data originate from different densities such as $\pi_{1}\mathcal{N}(\mu_{1},\Sigma_{1})$, $\pi_{2}\mathcal{N}(\mu_{2},\Sigma_{2})$,...\,$\pi_{K}\mathcal{N}(\mu_{K},\Sigma_{K})$,  where $\mathcal{N}(\mu,\Sigma)$ is a multivariate normal distribution with mean vector $\mu$ and variance covariance matrix $\Sigma$ and $\pi_{k}$s are the mixture proportions. The question we ask ourselves is what is the underlining networks from which the data come from? Statistical methods for analyzing such data are subject to active research currently \citep{AgakovOS12}. Gaussian graphical  Model (GGM) are a way to model such data.

 A Gaussian graphical  Model  for a random vector $Y=(Y_{1},...,Y_{p})$ is a pair $(G,P)$ where  $G$ is an undirected graph and $P = \left\{N(\mu,\Theta^{-1})\right\}$ is the model comprising all multivariate normal distributions whose inverse covariance matrix or precision matrix entries satisfies $(u,v)\in G  \Longleftrightarrow \Theta_{uv} \neq 0$. The conditional independence relationship among nodes are captured in $\Theta$.  Consequently, the problem of selecting the graph is equivalent to estimating the off-diagonal zero-pattern of the concentration matrix. Further details on these models as well interpretation of the conditional independency on the graph can be found in \citep{Lauritzen96}

In genomics, often there is heterogeneity in the data. We observe that in broad range of real world application ranging from finance to system biology, structural dependencies between the variables are rarely homogeneous i.e our population of individuals  may come  from  different clusters or mixture components without any information about their cluster membership. One challenge  is,  given only the sample measurement and with sparsity constraint, to recover the underlying networks.

Mixture distributions are often used to model heterogeneous data or observations supposed to have come from one of $K$ different components.  Under Gaussian mixtures, each component is suitably modelled by a family of Gaussian probability density. This paper deals with the problem of structural learning in reconstructing the underlying  graphical networks (using a graphical Gaussian model) from a data supposed to have come from a mixture of Gaussian distributions.

 We consider  model-based clustering \citep{McLachlan01032002} and assume that the data come from a finite mixture model where each component represents  a cluster. A large  literature  exists in normal mixture models; \citep{Lo01102001,Bozdogan}. Our focus here is on a high dimensional data setting where we present an algorithm based on a regularized  expectation maximization using Gaussian Mixture Model (GMM). We assume that our data $\mathbf{Y}_{i}=\left(Y_{i1},...,Y_{ip}\right)^{'}$ is generated through a $K \geq 1$ latent generative mixture components. We aim to group the data into a few $K$ clusters  and identify which observations are from which Gaussian components. 

A natural way for parameter estimation in GMMs is via a maximum likelihood estimation. However some performance degradation is encountered owing to the identifiability of the likelihood and the high dimensional setting. To overcome these problems, \cite{1993}   proposed a parameter reduction technique by re-parameterizing the covariance matrix through eigenvalue decomposition. In doing so,  some parameters are shared across clusters.  As a result of a continuous increasing number of dimensions, this approach can not totally alleviate the $(n <<p)$ phenomena.  Recently proposals to overcome the high dimensionality problem involve  estimating sparse precision matrix. Among these proposals is the penalized  log likelihood technique of \cite{Friedman01072008}, an $l_{1}$ regularization approach which encourages many of the entries of the precision matrix to be $0$. Our method is based on this idea. The $l_{1}$  penalty promotes sparsity. We provide sufficient conditions for consistency  of the penalized MLE.

Closely related to our work is that of \cite{Pan2007} where  variable selection is considered in model-based clustering. They considered GMM and penalize only the mean vectors and seeking to estimate sparse mean vectors. They assumed a common diagonal covariance matrix for all clusters. This work was later extended to \citep{Pan2009} where a new approach to penalized model-based clustering was considered but this time with unconstrained covariance matrices. However not much has been said about the consistency of the resulting estimators. Another recent work in this field is the work  by \cite{AgakovOS12} that learn structures of sparse high dimension latent variables with application to mixtures.
 
	In this article, we propose a penalized likelihood approach in the context of Gaussian Graphical mixture model, which  constraints the cluster distributions to be sparse. The parameters in the cluster distributions are estimated by incorporating an existing  graphical lasso method for covariance estimation into an  EM algorithm. In effect, we view each cluster as an instance of a particular GGM.  Therefore we aim at not only identify the population of individuals cluster membership but also  the dependencies  among the variables in each subgroup.  Additionally, we assess how well the resultant graphs  obtained through Glasso relate to the true graphs and we provide consistency results of  the  estimates.  Throughout this paper, we assume $K$, the number of components of mixture models is known.

The reminder of this article is organized as follows: We introduce the model, set up the  PMLE approach  and summarize the main result in connection with the consistency of the Glasso estimator in section 2. We then proceed with the inference procedure through a penalized version of the EM algorithm in section 3. In section 4 we present some simulations and an example of applications to illustrate our results. We conclude with a brief discussion and  future work in section 5.

\section{Penalized maximum likelihood estimation}
In this section we introduce our model-based clustering with GGM, then we derive the penalized  likelihood upon which statistical inference via the EM algorithm is based and prove consistency of the PMLE.

\subsection{The Mixture model}

The model consists of assuming that a variable $Z_{i}$,   describing which component an individual originates,  is a multinomial random variable with parameters $\pi_{k}$ denoting the mixture proportions or the mixing coefficients with $(0<\pi_{k}<1)$,  $\sum^{K}_{k=1}\pi_{k}=1$, and $K$ is known. In  essence
$$P(Z_{i}=k)=\pi_{k}$$ In our mixture model, we suppose that some vector-valued random variables $\mathbf{Y}_{1},...,\mathbf{Y}_{n}$ are a  random sample from the $K$ mixture components.  We  model each subpopulation separately by assuming a GGM where   $\mathbf{Y}_{i}|Z_{i}=k \sim N \left(\mu_{k}, \Sigma_{k} \right)$. In this paper we assume that $\forall$ $k$, $\mu_{k}=0$ .  In practice, this means that the data is assumed to be normalized by subtracting the mean.  Since $\mathbf{Y}_{i}$ is dependent on $Z_{i}$, we say that $Z_{i}$ represents the class that produced $\mathbf{Y}_{i}$ and we know  $\mathbf{Y}_{i}$ fully if we know which class $Z_{i}$ falls.  The density of each $\mathbf{Y}_{i}$ can be written as 

\begin{equation}
f_{\gamma}(\mathbf{y}_{i})=\sum^{K}_{k=1}\pi_{k}\varphi_{k}(\mathbf{y}_{i}|\Theta_{k})
\end{equation} where $\varphi(\mathbf{y}_{i}|\Theta_{k})$ denotes the density of Gaussian distribution with mean $0$ and inverse covariance covariance matrix $\Theta_{k}$; $f_{\mathbf\gamma}$  represents the ``incomplete''  mixture data density of the sample i.e $\mathbf{y} \sim f_{\gamma}$. We introduce the parameter set of mixture namely 

$$\Omega=\left\{\left\{\Theta_{k}\right\}^{K}_{k=1}| \Theta_{k}\succ 0, \quad k=1,...,K\right\}$$; $\Theta \succ 0$ indicates that $\Theta$ is positive-definite matrix, and 
$$J=\left\{\left\{\pi_{k}\right\}^{K}_{k=1}| \pi_{k}> 0, \quad k=1,...,K\right\}$$ where

\begin{equation}
\Gamma= \Omega \times J
\end{equation} denotes the parameter space with the true parameter defined as $\gamma_{0}=(\Theta_{0},\pi_{0}) \in \Gamma$.

In order to characterize the mixture model and estimate its parameters thereby recovering the underlying graphical structure from the data (seen as mixture of multivariate densities), several approaches may be considered. These approaches include graphical methods, methods of moments, minimum-distance methods, maximum likelihood \citep{Ruan11, Pan2009} and Bayesian methods \citep{bernardo2003bayesian, Biernacki2000}.  In our case we adopt the ML method.

\subsection{The penalized model-based likelihood}
We can now write the  likelihood of the incomplete data density as 

$$
L_{\mathbf{y}}(\gamma)=\prod^{n}_{i=1}\left[\sum^{K}_{k=1}\pi_{k}\varphi_{k}(\mathbf{y}_{i}| \Theta^{-1}_{k})\right]
$$ whose log-likelihood function is given by

\begin{equation}
l_{\mathbf{y}}(\gamma)=\sum^{n}_{i=1}\log f_{\gamma}(\mathbf{y}_{i})
\label{IC}
\end{equation}

The goal is to maximize the log-likelihood in (\ref{IC})  with respect to $\gamma$. Unfortunately,  a unique global maximum likelihood estimate does not exist because of the  permutation symmetries of the mixture subpopulation; \citep{Day1969, lindsay06}. Also the likelihood function of normal mixture models is not a bounded function on $\gamma$ as was put forward by \cite{Kiefer1956}. On the question of consistency of the MLE,  \cite{chanda1954}, \cite{Cramer1946}  focus on local ML estimation and mathematically investigate the existence of a consistent  sequence of local maximizers. These results are mainly based on Wald's technique \citep{Wald49} . \cite{Redner1981} later extended these results to establish the consistency of the MLE for mixture distributions with restrained or compact parameter spaces. It was proved that the MLE exists and it is globally consistent in a compact subset  $\hat{\Gamma}$ of $\Gamma$ that contains $\gamma_{0}$; i.e 

 $$ \mbox{given} \quad  \hat{\gamma}_{n}| l_{\mathbf{y}}(\hat{\gamma}_{n})= \underset{\gamma \in \hat{\Gamma}}{\mbox{max}} \quad l_{\mathbf{y}}(\gamma) , \quad \hat{\gamma}_{n} \rightarrow \gamma_{0} \mbox{ in  prob. for  n} \rightarrow \infty$$ 

 In addition to the degenerate nature of the likelihood  \cite{Kiefer1956}  on the set $\Gamma$, the``high dimensional, low sample size setting'' - where the number of observations $n$ is smaller that the number of nodes or features $p$ - is another complication. Estimating the parameters in the GGMM by maximizing criterion (\ref{IC})  is a complex one.  The penalized likelihood-based method \citep{Friedman01072008, Yuan01032007} is a promising approach to counter the degeneracy of $l_{\mathbf{y}}(\gamma)$ while keeping the parameter space $\Gamma$ unaltered. However, to make the PMLE work, one has to solve the problem of what kind of penalty functions  are eligible. We opt for a penalty function that guarantees  consistency  and also prevents the likelihood from degenerating under the multivariate mixture model. We assume that the penalty function $P: \Gamma\rightarrow \mathbb{R}^{+}_{0}$   satisfies:

\begin{equation}
\underset{|\Theta_{k}|\rightarrow \infty}{\lim} P(\Theta_{k})|\Theta_{k}|^{n}=0 \quad \forall k\in \left\{1,2,...,K\right\} \quad \forall n
\end{equation} where $|\Theta|$ denotes determinant of $\Theta$, $P(\Theta)=\exp(-\lambda ||\Theta||_{1})$.

This results in placing an $l_{1}$ penalty on the entries of the concentration matrix so that the resulting estimate is sparse and zeroes in this matrix correspond to conditional independency between the nodes similar to \citep{Meinshausen06}. Numerous  advantages result from this approach. First of all, the corresponding penalized likelihood is bounded and the penalized likelihood function does not degenerate in any point of the closure of parameter space $\hat{\Gamma}$ and therefore the existence of the penalized maximum likelihood estimator is guaranteed. Next, in the context of GGM, penalizing the precision matrix results in better estimate and sparse models are more interpretable and often preferred in application. 

We define  the $l_{1}$ penalized log-likelihood as:

 \begin{equation}
l^{p}_{{\mathbf{y}}}(\gamma)=l_{\mathbf{y}}(\gamma)-\lambda\sum^{K}_{k=1}||\Theta_{k}||_{1}
\label{pe}
\end{equation}   where $\lambda >0$ is a user-defined tuning parameter that regulates the sparsity level, $||\Theta||_{1}=\sum_{i, j}|\Theta_{ij}|$, $K$ is the number of mixing components assumed fixed. The hyperparameters $K$ and $\lambda$ determine the complexity of the model. The corresponding PMLE are defined as

\begin{equation}
\hat{\gamma}_{\lambda} =\underset{\gamma} {\mbox{argmax}} \quad l^{p}_{y}(\gamma) 
\label{pme}
\end{equation}

Our method penalizes all the entries of the precision matrix including the diagonal elements. We do this in order to avoid the likelihood to degenerate. To see this, consider a special case of a model consisting of two univariate normal mixtures $\pi_{1}\varphi(\mathbf{y},\sigma_{1})+ \pi_{2}\varphi(\mathbf{y},\sigma_{2})$. By letting $\sigma_{1} \rightarrow 0$ with other parameters remaining constant, the likelihood tends to infinity for values of $y=0$ ; i.e  the likelihood degenerates due to mixture formulation whereby a single observation mixture component with a decreasing variance on top of the observation explodes the likelihood. For that matter an $l_{1}$ penalty which does not  penalize the diagonal elements tend to  result in a degenerate ML estimator especially when $n \rightarrow \infty$.

\subsection{Consistency}
At this stage we want to characterize the solution obtained by maximizing (\ref{pe}).  The general theorem concerning the consistency of the MLE \citep{redner1980maximum, Wald49} can be extended to cover our type of penalized MLE. This is because if a likelihood function which yields a strong consistent estimate over a compact set is given, then our $l_{1}$ penalty would not alter the consistency properties. Consistency of the PMLE is given in theorem \ref{th1}. The latter uses results in \citep{Wald49} under the classical MLE over a compact set. The MEL version of  theorem \ref{th1} can be found in  \citep{redner1980maximum}. We define first a set of conditions upon which theorem 1 holds.

\begin{enumerate}
	\item[C1:]  Let the parameter space $\Gamma$ be compact set,  and $\bar{\Gamma}$ denotes the quotient topological space obtained from $\Gamma$ and suppose that $\bar{\Gamma}$ is any compact subset containing $\gamma_{0}$.
	\item[C2:] Let $B_{r}\left(\gamma\right)$ be the closed ball of radius $r$ about $\gamma.$ Then for any positive real number $r$, let:
						$$f_{\gamma}(\mathbf{y}, r)= \underset{\eta \in B_{r}\left(\gamma\right)}  {sup} f_{\gamma}(\mathbf{y}, \eta); \quad f^{*}_{\gamma}(\mathbf{y}, r)= max\left[1,f_{\gamma}(\mathbf{y}, r)\right]$$ Then
	for each $\gamma$ and for sufficiently small $r$
	$$  \int \log f^{*}_{\gamma}(\mathbf{y}, r)f_{\gamma_{0}}(\mathbf{y},r) < \infty$$ 
	\item[C3:]  
	$$\int |\log f_{\gamma_{0}}(\mathbf{y})|f_{\gamma_{0}}(\mathbf{y}) < \infty$$ 
	\item[C4:]  
	
	$$\int |\log f_{i}(\mathbf{y},\gamma_{i})|f_{j}(\mathbf{y},\gamma_{j}) < \infty \quad for\quad \gamma_{i} \in \Gamma_{i} \quad and \quad \gamma_{j} \in \Gamma_{j}$$ 
	
\end{enumerate}

\begin{theorem}
Suppose that the mixing distribution satisfy conditions (C1-4). Define ${|\gamma_{0}|}={||\pi_{0}||}_{2}+{||\Theta_{0}||}_{F}$. Suppose that $\pi_{k}$ is bounded away from zero, it follows that for a fixed $p$, the penalized likelihood solution $\hat{\gamma}_{\lambda_{n}}$ is  consistent in the quotient topological space $ \bar{\Gamma} $ i.e $\forall \quad \epsilon >0$
					$$\underset{n\rightarrow \infty} {\lim}  P\left(|\hat{\gamma}_{\lambda_{n}}-\gamma_{0}|> \epsilon \right)=0 $$
					\label{th1}
\end{theorem}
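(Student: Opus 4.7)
My plan is to build on Wald's classical consistency proof for the MLE, in the form given by Redner for mixtures, and then argue that the additive $l_{1}$ penalty does not disturb the argument provided $\lambda_{n}$ grows sufficiently slowly in $n$. The starting point is the Kullback-Leibler information inequality: for any $\gamma \in \bar{\Gamma}$ not equivalent to $\gamma_{0}$, identifiability on the quotient topology (which collapses the $K!$ label-switching orbits) together with strict concavity of $\log$ yields
$$
E_{\gamma_{0}}\!\left[\log f_{\gamma}(\mathbf{Y})\right] < E_{\gamma_{0}}\!\left[\log f_{\gamma_{0}}(\mathbf{Y})\right].
$$
Conditions C3 and C4 ensure these expectations are well defined and finite, so that the KL gap is a strictly positive real number at each fixed $\gamma \neq \gamma_{0}$.

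Next I would upgrade this pointwise inequality to a uniform statement via the compactness assumed in C1 and the local-supremum construction in C2. For any open neighborhood $U$ of $\gamma_{0}$, the set $\bar{\Gamma}\setminus U$ is compact; by C2 and dominated convergence, $E_{\gamma_{0}}[\log f^{*}_{\gamma}(\mathbf{Y},r)] \downarrow E_{\gamma_{0}}[\log f_{\gamma}(\mathbf{Y})]$ as $r\downarrow 0$, so each point in $\bar{\Gamma}\setminus U$ admits a radius $r(\gamma)$ for which the localised expected log-likelihood is strictly less than $E_{\gamma_{0}}[\log f_{\gamma_{0}}(\mathbf{Y})]$. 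Extracting a finite subcover $\{B_{r_{j}}(\gamma_{j})\}$ and applying the strong law of large numbers to each $\log f^{*}_{\gamma_{j}}(\mathbf{Y}, r_{j})$, one obtains a deterministic $\delta>0$ such that, almost surely for all $n$ large,
$$
\sup_{\gamma \in \bar{\Gamma}\setminus U} \frac{1}{n} l_{\mathbf{y}}(\gamma) \;<\; \frac{1}{n} l_{\mathbf{y}}(\gamma_{0}) - \delta.
$$

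To incorporate the penalty, I would exploit compactness of $\bar{\Gamma}$: the map $\gamma\mapsto \sum_{k}\|\Theta_{k}\|_{1}$ is continuous and therefore bounded above by some $M<\infty$ on $\bar{\Gamma}$. Consequently
$$
\sup_{\gamma\in\bar{\Gamma}}\left|\frac{1}{n}l^{p}_{\mathbf{y}}(\gamma) - \frac{1}{n}l_{\mathbf{y}}(\gamma)\right| \;\leq\; \frac{\lambda_{n} M}{n},
$$
and taking $\lambda_{n} = o(n)$ this uniform perturbation is eventually smaller than $\delta/2$. Combined with the Wald-type inequality above, this forces $\hat{\gamma}_{\lambda_{n}} \in U$ for all $n$ large, almost surely, and a fortiori in probability; since $U$ was arbitrary, this is the desired convergence in the quotient metric.

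The main obstacle, in my view, is the careful handling of the quotient space: one must verify that the KL gap used above is indeed strictly positive on $\bar{\Gamma}$ and not merely modulo the label-switching ambiguity, and that the uniform SLLN argument can be performed with representatives that are jointly measurable across orbits. A secondary technical point is that the lower bound on $\pi_{k}$ is essential to keep $f_{\gamma}(\mathbf{y})$ bounded away from the ``vanishing component'' degeneracy, so that the dominating functions produced by C2 are actually integrable and the penalty truly prevents the $\sigma_{k}\to 0$ blow-up flagged in the paper. With these points addressed, the proof reduces to the Wald/Redner template plus the harmless $O(\lambda_{n}/n)$ correction above.
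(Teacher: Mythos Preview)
Your argument is correct, but it follows a genuinely different route from the paper's own proof. The paper does \emph{not} rerun Wald's construction; instead it inserts the unpenalized MLE $\hat{\gamma}_{n}$ as an intermediary and splits
\[
P\bigl(|\hat{\gamma}_{\lambda_{n}}-\gamma_{0}|>\epsilon\bigr)\;\le\;P\bigl(|\hat{\gamma}_{\lambda_{n}}-\hat{\gamma}_{n}|>\epsilon/2\bigr)+P\bigl(|\hat{\gamma}_{n}-\gamma_{0}|>\epsilon/2\bigr),
\]
disposing of the second term by a direct appeal to Redner's Theorem~5, and handling the first by sandwiching $l^{p}_{n}$ between the unpenalized $l$ and a smooth lower envelope $l^{p}_{n,L}$ that converges pointwise to $l$, so that the corresponding argmaxes coalesce. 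By contrast, you work directly with the penalized criterion and absorb the penalty as a uniform $O(\lambda_{n}/n)$ perturbation of $\tfrac{1}{n}l_{\mathbf{y}}$, which is negligible against the fixed KL gap $\delta$ once $\lambda_{n}=o(n)$; this lets you reuse the Wald/Redner uniform SLLN inequality verbatim without ever introducing $\hat{\gamma}_{n}$.

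Your approach is more self-contained and makes the required growth condition on $\lambda_{n}$ explicit, whereas the paper's triangle-inequality decomposition is more modular (it treats MLE consistency as a black box) but leaves the ``$\hat{\gamma}_{\lambda_{n}}\to\hat{\gamma}_{n}$'' step comparatively informal. Both ultimately rest on the same ingredients---compactness (C1), the integrable local envelopes (C2), and the boundedness of $\sum_{k}\|\Theta_{k}\|_{1}$ on $\bar{\Gamma}$---so the difference is one of packaging rather than of strength.
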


\begin{proof}
Let  the PMLE $\hat{\gamma}_{\lambda_{n}}$ and MLE $\hat{\gamma}_{n}$ be defined by

$$\hat{\gamma}_{\lambda_{n}} =\underset{\gamma} {\mbox{argmax}} \quad l^{p}_{n}(\gamma) $$ and

$$\hat{\gamma}_{n}={\mbox{argmax}} \quad l(\gamma)$$ where  
$$ l^{p}_{n}(\gamma)= l(\gamma)-\lambda_{n}\sum^{K}_{k=1}||\Theta_{k}||_{l_{1} \quad \forall \quad k \in \left\{1,...,K\right\}}$$ Then

$\forall \epsilon >0$ we have
\begin{eqnarray}
P(|\hat{\gamma}_{\lambda_{n}}-\gamma_{0}|> \epsilon)&=&P(|\hat{\gamma}_{\lambda_{n}}-\hat{\gamma}_{n}+\hat{\gamma}_{n}-\gamma_{0}|> \epsilon)\nonumber\\
&\leq& P(|\hat{\gamma}_{\lambda_{n}}-\hat{\gamma}_{n}|> \epsilon/2)+P(|\hat{\gamma}_{n}-\gamma_{0}|> \epsilon/2)\nonumber\\
\label{ineq}
\end{eqnarray}

Considering the  second inequality  on the RHS of (\ref{ineq}), we can write that
					$$\underset{n\rightarrow \infty} {\lim}  P\left(|\hat{\gamma}_{n}-\gamma_{0}|> \epsilon/2 \right)=0 $$
 This follows from theorem 5 of \cite{Redner1981}. Therefore it is sufficient to prove that 
					$$\underset{n\rightarrow \infty} {\lim}  P(|\hat{\gamma}_{\lambda_{n}}-\hat{\gamma}_{n}|> \epsilon/2)=0 $$
					
Suppose $l^{p}_{n}(\gamma)$ is bounded below by a function $l^{p}_{n,L}(\gamma)$  under the following assumptions:

\begin{enumerate}
	\item   There exists a neighborhood $\gamma_{0}$ of $\Gamma$  such that $l^{p}_{n,L}(\gamma)$ is continuously differentiable wit respect to parameters in $\gamma$ 
\item  $l^{p}_{n,L}(\gamma)$  converges (pointwise) to $l(\gamma)$ as $n \rightarrow \infty$

\end{enumerate}

 We define 

$$\hat{\gamma}_{\lambda_{n,L}} =\underset{\gamma} {\mbox{argmax}} \quad  l^{p}_{n,L}(\gamma)$$ 
Then  the followings hold:

$$\forall \delta >0  \quad \exists \quad n_{1} \in N \quad s.t. \quad \forall n>n_{1}\quad |\hat{\gamma}_{\lambda_{n,L}}-\hat{\gamma}_{n}|<\delta$$

Let $g_{n}$ be a function  such that $l^{p}_{n,L}<g_{n}$,  then
$$\hat{\gamma}_{g_{n}} =\underset{\gamma} {\mbox{argmax}} \quad  g_{n}(\gamma)$$ satisfies

$$\forall \delta >0  \quad \exists \quad n_{2} \in N s.t. \quad \forall n>n_{2}\quad |\hat{\gamma}_{g_{n}}-\hat{\gamma}_{n,L}|<\delta$$

Take $g_{n}=l^{p}_{n}$, then we can write
$$\forall \delta >0  \quad \exists \quad n_{3} \in N s.t. \quad \forall n>n_{3}\quad |\hat{\gamma}_{\lambda_{n}}-\hat{\gamma}_{n,L}|<\delta$$

Suppose $\delta=\frac{\epsilon}{4}$ and $n\geq max\left\{n_{1},n_{2}, n_{3}\right\}$, then

\begin{eqnarray}
P(|\hat{\gamma}_{\lambda_{n}}-\hat{\gamma}_{n}|> \frac{\epsilon}{2})&\leq&  P(|\hat{\gamma}_{\lambda_{n}}-\hat{\gamma}_{n,L}| > \frac{\epsilon}{4})+P(|\hat{\gamma}_{n,L}-\hat{\gamma}_{n}|>\frac{\epsilon}{4})\nonumber\\
&=& 0 \quad a.s \quad n \rightarrow \infty 
\end{eqnarray}
\end{proof}

\section{Penalized EM algorithm} \label{PMLE}

In order to  maximize the penalized likelihood function (\ref{pe}) we consider a penalized version of the EM algorithm of \cite{Dempster97} . To do that we first augment our data $\mathbf{Y}_{i}$ with $\mathbf{Z}_{i}$ so that the complete data associated with our model now becomes $\mathbf{C}_{i}=(\mathbf{Y}_{i},\mathbf{Z}_{i})$ and an EM algorithm iteratively maximizes, instead of the penalized observed log-likelihood $l^{p}_{{\mathbf{y}}}$ in (\ref{pe}), the quantity $Q(\gamma|\gamma^{(t)})$, the  conditional expectation of the penalized log-likelihood of the augmented data and $\Omega^{(t)}$ is the current value at iteration $t$.

Suppose $\mathbf{c}_{i} \sim h_{\mathbf{c}_{i}}(\gamma)$ i.e  $h_{\mathbf{c}_{i}}(\gamma)$ is the density of the augmented data $\mathbf{c}_{i}$.
Now the penalized log-likelihood of the augmented data  can be written as

\begin{eqnarray}
l^{p}_{\mathbf{c}}(\gamma)&=&\ln\left[h_{\mathbf{c}_{i}}(\gamma)\right]-\lambda\sum^{K}_{k=1}||\Theta_{k}||_{l_{1}}\nonumber\\
l^{p}_{\mathbf{c}}(\gamma)&=&\sum^{n}_{i=1}\ln \pi_{k}+\ln \phi_{k}(\mathbf{y}_{i}|\Theta^{-1}_{k})-\lambda\sum^{K}_{k=1}||\Theta_{k}||_{l_{1}}\nonumber\\
&=&\sum^{n}_{i=1}\sum^{K}_{k=1} 1_{\left\{Z_{i}=k\right\}}\left[\ln \pi_{k}+\ln \phi_{k}(\mathbf{y}_{i}|\Theta^{-1}_{k})\right]-\lambda_{n}\sum^{K}_{k=1}||\Theta_{k}||_{l_{1}}
\label{ll}
\end{eqnarray}

Note the indicator function $1_{\left\{Z_{(i)}=k\right\}}$ simply says that if you knew which component the observation $i$ came from, we would simply use its corresponding $\Theta_k$ for the likelihood. For illustration purpose, suppose we have $3$ observations and we are certain that the first two were generated by the Gaussian density $N(0,\Theta_{2})$ and the last came from $N(0,\Theta_{1})$, then we write the full log-likelihood as follows:
\begin{equation}
l_{\mathbf{Y}\mathbf{Z}}(\Theta)=l_{\mathbf{Y}_{1}}(\Theta_{2})+l_{\mathbf{Y}_{2}}(\Theta_{2})+l_{\mathbf{Y}_{3}}(\Theta_{1})
\end{equation}

\subsection{The E-step} 

 From (\ref{ll}), We compute the quantity $Q(\gamma|\gamma^{(t)})$ as follows
 
\begin{eqnarray}
Q(\gamma|\gamma^{(t)})&=&E_{\mathbf{Z}_{i}}\left[l_{\mathbf{Y}\mathbf{Z}}(\mathbf\gamma)-\lambda_{n}||\Theta||_{1}|y;\gamma^{(t)}\right]\nonumber\\
&=&\sum^{n}_{i=1}\sum^{K}_{k=1}\left[\ln \phi_{k}(\mathbf{y}_{i}|\Theta^{-1}_{k})+\ln \pi_{k}\right]E_{\mathbf{Z}_{i}}\left[ 1_{\left\{Z_{i}=k\right\}}|\mathbf{y}_{i};\gamma^{(t)}\right]-\lambda_{n} ||\Theta_{k}||_{1}\nonumber\\
&=&\sum^{n}_{i=1}\sum^{K}_{k=1}\left[\ln \phi_{k}(\mathbf{y}_{i}|\Theta^{-1}_{k})+\ln \pi_{k}\right]P\left(Z_{i}=k|\mathbf{y}_{i};\gamma^{(t)}\right)-\lambda_{n}||\Theta_{k}||_{1}\nonumber\\
&=&\sum^{n}_{i=1}\sum^{K}_{k=1}\left[\ln \phi_{k}(\mathbf{y}_{i}|\Theta^{-1}_{k})+\ln \pi_{k}\right]\omega^{(t)}_{ik}-\lambda_{n} ||\Theta_{k}||_{1}
\label{Emax}
\end{eqnarray}

The E-step actually consists of calculating $\omega_{ik}$, the  probabilities (condition on the data and $\gamma^{(t)}$) that  $\mathbf{Y}_{i}$'s originate from component $k$. It can also be seen as the responsibility that component $k$ takes for explaining the observation  $\mathbf{Y}_{i}$ and it tells us for which group an individual actually belongs. This is the soft K-mean clustering. Using Bayes theorem, we have:
\begin{eqnarray}
\omega^{(t)}_{ik}&=&P\left(Z_{i}=k|\mathbf{y}_{i},\gamma^{(t)}\right)\nonumber\\
&=&\frac{P(\mathbf{y}_{i}|Z_{i}=k;\gamma^{(t)})P(Z_{i}=k)}{\sum^{K}_{l=1}P(\mathbf{y}_{i}|Z_{i}=l;\gamma^{(t)})P(Z_{i}=l)}\nonumber\\
&=&\frac{ \phi^{(t)}_{k}(\mathbf{y}|\Theta^{-1}_{k})\pi^{(t)}_{k}}{\sum^{K}_{l=1} \phi^{(t)}_{l}(\mathbf{y}_{i}|\Theta^{-1}_{k})\pi^{(t)}_{l}}\nonumber\\
\end{eqnarray}

\subsection{The M-step}
The M-step for our mixture model can be split in to two parts, the maximization related to $\pi_{k}$ and the maximization related to $\Theta_{k}$.
\begin{enumerate}
	\item M-step for $\pi_{k}$:
	
For the maximization over $\pi_{k}$ we make use of the constraint that $\sum^{K}_{k=1}\pi_{k}=1$ i.e  $\pi_{K}=1-\sum^{K-1}_{k=1}\pi_{k}$ and $\pi_{k} >0$. It turns out that there is an explicit form for $\pi_{k}$.
Let $k_{0}\in \left\{1,...,K-1\right\}$. Then

\begin{equation}
\frac{\partial Q}{\partial \pi_{k_{0}}}=\sum ^{n}_{i=1}\left[\frac{\omega^{(t)}_{ik_{0}}}{\pi_{k_{0}}}-\frac{\omega^{(t)}_{iK}}{1-\sum^{K-1}_{k=1}\pi_{k}}\right]
\end{equation} 
 Setting $\frac{\partial Q}{\partial \pi_{k_{0}}}=0$, yields the following:

\begin{equation}
\omega^{(t)}_{.k_{0}}\sum^{K-1}_{k=1}\pi_{k}+\pi_{k_{0}}\omega^{(t)}_{.K}=\omega^{(t)}_{.k_{0}}
\label{equ}
\end{equation} It can be shown that a unique solution to (\ref{equ}) is

\begin{eqnarray}
\pi^{(t+1)}_{k_{0}}&=&\omega^{(t)}_{.k_{0}}/n   \nonumber\\
&=&\sum^{n}_{i=1}\omega^{(t)}_{ik_{0}}/n
\end{eqnarray} 

	\item M-step for $\Theta_{k}$:

Next, to maximize $(\ref{Emax})$ over $\Theta_{k}$, we only need the term that depends on  $\Theta_{k}$. The first thing we do here is to try to formulate the maximization problem for a mixture component to be similar to that for Gaussian graphical modeling with the aim of applying graphical lasso method.  Now from $(\ref{Emax})$,  for a specific cluster ${k_{0}}$, the term that depends on the cluster specific covariance matrix $\Theta_{k_{0}}$ is given by

\begin{eqnarray}
Q\left(\Theta_{k_{0}}\right)& =&\sum^{n}_{i=1}\omega^{(t)}_{ik_{0}}\ln \phi_{k_{0}}(\mathbf{y}_{i}|\Theta^{-1}_{k_{0}})-\lambda_{n}||\Theta_{k_{0}}||_{1}\nonumber\\
&=& \sum^{n}_{i=1}\omega^{(t)}_{ik_{0}}\left[\frac{1}{2}\ln|\Theta_{k_{0}}|-\frac{1}{2}\mathbf{y}^{'}_{i}\Theta_{k_{0}}\mathbf{y}_{i}\right]-\lambda_{n} ||\Theta_{k_{0}}||_{1}\nonumber\\
&=&\sum^{n}_{i=1}\frac{\omega^{(t)}_{ik_{0}}}{2} \ln|\Theta_{k_{0}}|-\frac{1}{2}tr\left(\sum^{n}_{i=1}\omega^{(t)}_{ik_{0}}(\mathbf{y}_{i}\mathbf{y}^{'}_{i})\Theta_{k_{0}}\right)-\lambda_{n} ||\Theta_{k_{0}}||_{1}\nonumber\\
&=& \frac{\omega^{(t)}_{.k_{0}}}{2}\left[\ln|\Theta_{k_{0}}|-tr\left(\tilde{S}_{k_{0}}\Theta_{k_{0}}\right)-\frac{2\lambda _{n}}{\omega^{(t)}_{.k_{0}}} ||\Theta_{k_{0}}||_{1}\right]\nonumber\\
&=& \frac{\omega^{(t)}_{.k_{0}}}{2}\left[\ln|\Theta_{k_{0}}|-tr\left(\tilde{S}_{k_{0}}\Theta_{k_{0}}\right)-\tilde{\lambda}_{n} ||\Theta_{k_{0}}||_{1}\right]
\label{max2}
\end{eqnarray} where 

\begin{equation}
\tilde{S}_{k_{0}}=\frac{\sum^{n}_{i=1}\omega^{(t)}_{ik_{0}}(\mathbf{y}_{i}\mathbf{y}^{'}_{i})}{\omega^{(t)}_{.k_{0}}}
\label{wei}
\end{equation} is the weighted empirical covariance matrix,  and

\begin{equation}
\hat{\Theta}_{k_{0}}=\underset{\Theta}{\mbox{argmax}}\left\{\ln|\Theta_{k_{0}}|-tr(\tilde{S}_{k_{0}}\Theta_{k_{0}})-\tilde{\lambda}_{n} ||\Theta_{k_{0}}||_{1}\right\}
 \label{max1}
 \end{equation} subject to the constraint that $\Theta_{k_{0}}$ is positive definite with $\tilde{\lambda}_{n}=\frac{2\lambda_{n}}{\omega^{(t)}_{.k_{0}}}$ .
\end{enumerate}
Therefore the maximization  of $\Theta_{k}$ consists of running the graphical lasso procedure \citep{Friedman01072008} for each cluster where each observation $\mathbf{Y}_{i}$ for  $\Theta_{k}$  gets a weight and the sampling covariance matrix $S_{k}$ is transformed to a weighted sampling covariance.   This  is a major innovation in our work where we formulate the Gaussian mixture modelling problem as a Gaussian Graphical modeling framework.

\section{Simulation and Real-data Exampple} 
We generate data from a two components mixture and consider two different schemes based on $\lambda_{n}$. We  study the consistency properties of the PMLE by allowing the sample size to grow. We subsequently applied our method to $2$ well known data ``Scor'  and ``CellSignal''  data. 
\begin{figure}[!ht]
	\centering
		\includegraphics[height =5cm, width =8cm]{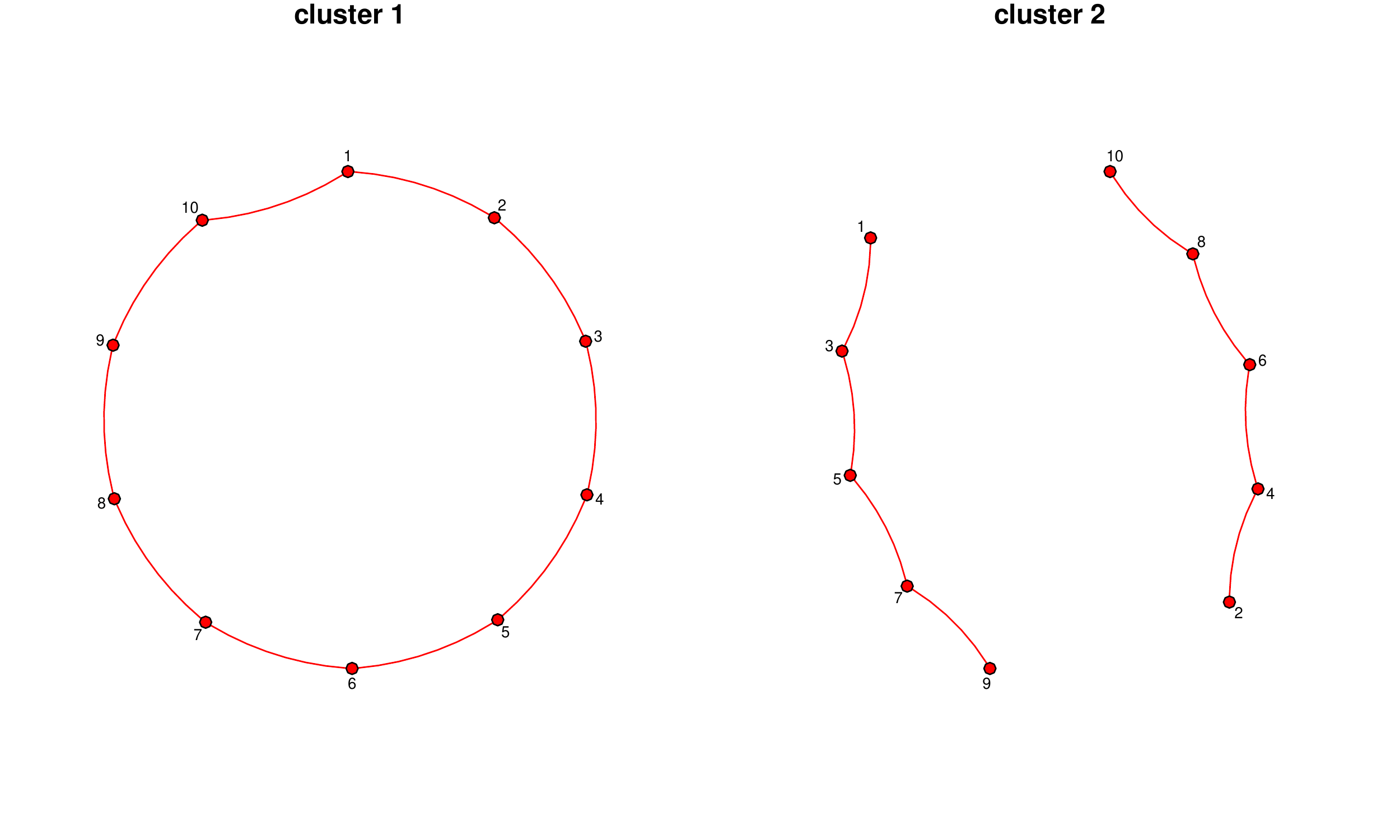}
	\caption{True graphical model of the 2 clusters}
	\label{sim}
\end{figure}

\subsection{Simulation}
We investigate the consistency  properties of the PMLE using our penalized EM algorithm described in section \ref{PMLE}. We simulate data $\mathbf{Y}_{1},...,\mathbf{Y}_{n}$ from two-component multivariate normal mixture models with probability (true mixture proportion) equals $0.5$ and inverse covariance  matrix $\Theta_{k}$ built according to  the following schemes. 

\begin{equation}
\Theta_{1}(i,j)=\left\{\begin{array}{l}1 \quad if \quad i=j\\
-0.4, \quad if |i-j|=1\\
0, \mbox{elsewhere}
\end{array}\right. 
\end{equation}

\begin{equation}
\Theta_{2}(i,j)=\left\{\begin{array}{l}1 \quad if \quad i=j\\
-0.4, \quad if |i-j|=2\\
0, \mbox{elsewhere}
\end{array}\right. 
\end{equation}

%%%%%%%%%%%%%%%%%%%%%%%%%%%%%%%%%%%% BEGINING table for example 1%%%%%%%%%%%%%%%%%%%%%%%%%%%%%%%%%%%%%%%%%%%%%%%%%%%%%%%%%%%%%%%%%%%

\begin{table}[!ht]
\centering
\begin{tabular}{ccccccrrr} 
	\hline
\textbf{\em Model}  & Bias(AD)/Frobenuis     &  $F_{1}$	score	  & TP &  FP&  Precison	&  Recall \\

\hline \hline  \textbf{\em n=100 }  \\
																																											
 Penalized     &  \\  																													
   $\pi$			&    AD=0.1125       &\\ 																																
	$\Theta_{1}$	&  F=1.7280 	&     	 0.555  & 5& 5&  0.5 & 0.625                &\\									                 																								       
	$\Theta_{2}$	&  F=1.6221   &        0.529  & 9& 15 &   0.375& 0.9&                  &\\				
							
	\hline \hline  \textbf{\em n=300 }  \\																																							
 Penalized   &  \\    																													
   $\pi$			&   AD=0.067       &\\ 																																
	$\Theta_{1}$	& F= 0.9702 &               0.5333& 8& 14& 0.3636& 1              &\\													               																								       
	$\Theta_{2}$	& F= 0.8432 &               0.5882& 10& 14&  0.4167 & 1              &\\

		\hline \hline  \textbf{\em n=800 }  \\
																																									
 Penalized   &  \\    																													
   $\pi$			&   AD=0.0625       &\\ 																																
	$\Theta_{1}$	& F=0.9279 &                 0.5882& 10&14& 0.4166& 1            &\\												               																								       
	$\Theta_{2}$	& F=0.4804 &               		 0.4705  & 8& 18 &0.3076& 1&                 &\\

		\hline \hline  \textbf{\em n=2000 }  \\
 Penalized   &  \\    																													
   $\pi$			&   AD=0.0263        &\\ 																																
	$\Theta_{1}$	& F=0.4170 &                 0.5925& 8&  11& 0.4210 & 1&              &\\												               																								       
	$\Theta_{2}$	& F=0.4465 &                0.625  & 10& 12 & 0.4545& 1&                 &\\

		\hline \hline  \textbf{\em n=5000 }  \\																																						
 Penalized   &  \\    																													
   $\pi$			&   AD=0.002        &\\ 																																
	$\Theta_{1}$	& F=0.3529 &                 0.6153& 8&10& 0.444& 1              &\\												               																								       
	$\Theta_{2}$	& F=0.2883 &              		0.6060  & 10& 13 & 0.4347& 1&                 &\\	
	
	\hline	
\end{tabular}
\caption{The  Absolute Deviation (AD), Frobenuis norm (F), the $F_{1}$ score, the True Positive (TP), the False Positive (FP), the Precision and the Recall of the PMLE for two-component mixture with $\lambda_{n} \propto \sqrt{ n \log p}$.}
\label{rep} 
\end{table}
%%%%%%%%%%%%%%%%%%%%%%%%%%%%%%%%%%%%   END table for example 1%%%%%%%%%%%%%%%%%%%%%%%%%%%%%%%%%%%%%%%%%%%%%%%%%%%%%%%%%%%%%%%%%%%
The corresponding graphical model structures are depicted in Figure \ref{sim}.
For a fixed $p$, we consider two schemes one with  $\lambda_{n} \propto \sqrt{ n \log p}$  and the other with $\lambda_{n} \propto \sqrt{\log p}$, each with increasing sample sizes, $n=(100, 300, 800, 2000, 5000)$ to examine the consistency of the PMLEs.  In all cases, parameter estimation is achieved by maximizing the likelihood function via our penalized EM-algorithm and a model selection is performed inside the algorithm based on Extended Bayesian Information Criterion (EBIC), \citep{Chen01092008}. The results of our penalized EM-algorithm  approach are compared based on the two different schemes corresponding to different values of $\lambda_{n}$.

Due to the effect of label switching, we are not able to assign each parameter estimate correctly to the right class. As a result, the estimates $\left\{(\pi_{1},\Theta_{1}),  (\pi_{2},\Theta_{2})\right\}$  will be interchangeably represented.  We compute the Absolute Deviation (AD)  of the mixture proportions, and compare the Frobenuis norm of the difference between the true and estimated precision matrices for each cluster. In addition we compute the $F_{1}$ score, True positive (TP), False positive (FP), Precision and Recall for the PMLE.

\textbf{Example 1.}
We considered the simulated two-component multivariate normal mixture models above and   choose sequence of values of $\lambda_{n}$  such that     $c_{1} \sqrt{ n \log p} \leq  \lambda_{n} \leq  c_{2} \sqrt{ n \log p}$. On experimental basis we set $(c_{1},c_{2})=(0.1, 0.25)$. The performances of the penalized EM-algorithm corresponding to different sample sizes are presented in Table \ref{rep}. 

The results show that as  the sample size increases, the AD (for the mixture proportions) and the Frobenuis norms (for the precision matrices) decrease indicating the consistency of the PMLEs. At $n=5000$, the AD for the mixture proportion is almost $0$, indicating that our method has recovered precisely the true mixture distribution. Based on the EBIC criterion, we reported also the $F_{1}$ score, the True Positive (TP), the False Positive (FP), the Precision and the Recall of the PMLE. We recorded an overall improvement in the $F_{1}$ score as $n$ increases.

\textbf{Example 2.}
In this example, we again choose the same two-component multivariate Gaussian mixture model. In contrast to the model used in example 1, we have fixed the tuning parameter 
$\lambda_{n}$  such that     $c_{1} \sqrt{  \log p} \leq  \lambda_{n} \leq  c_{2} \sqrt{ \log p}$; $(c_{1},c_{2})$ remain unchanged.  The performances of the penalized EM-algorithm corresponding to different sample sizes are presented in  Table \ref{rep2}. We again observe a  decrease in both the Frobenuis norm  and the AD as $n$ increases even though  we suffer from a deficiency in the AD of $\pi$ for the case $n=800$. However  the AD is almost $0$ at $n=5000$. We note that this penalty decreases to $0$ faster and as result tends to produce full graph as can be seen in the higher value recorded for false positive.
%%%%%%%%%%%%%%%%%%%%%%%%%%%%%%%%%%%%%%%%%%%%%%%%%%%%%%%%%%%%%%%%%%%%%%%%%%%%%%%%%%%%%%%%%%%%%%%%%%%%%%%%%%%%%%%%%%%%%%%%%%%%%%%%%%

\begin{figure}[bt]
	\centering
		\includegraphics[height =12cm, width =14cm]{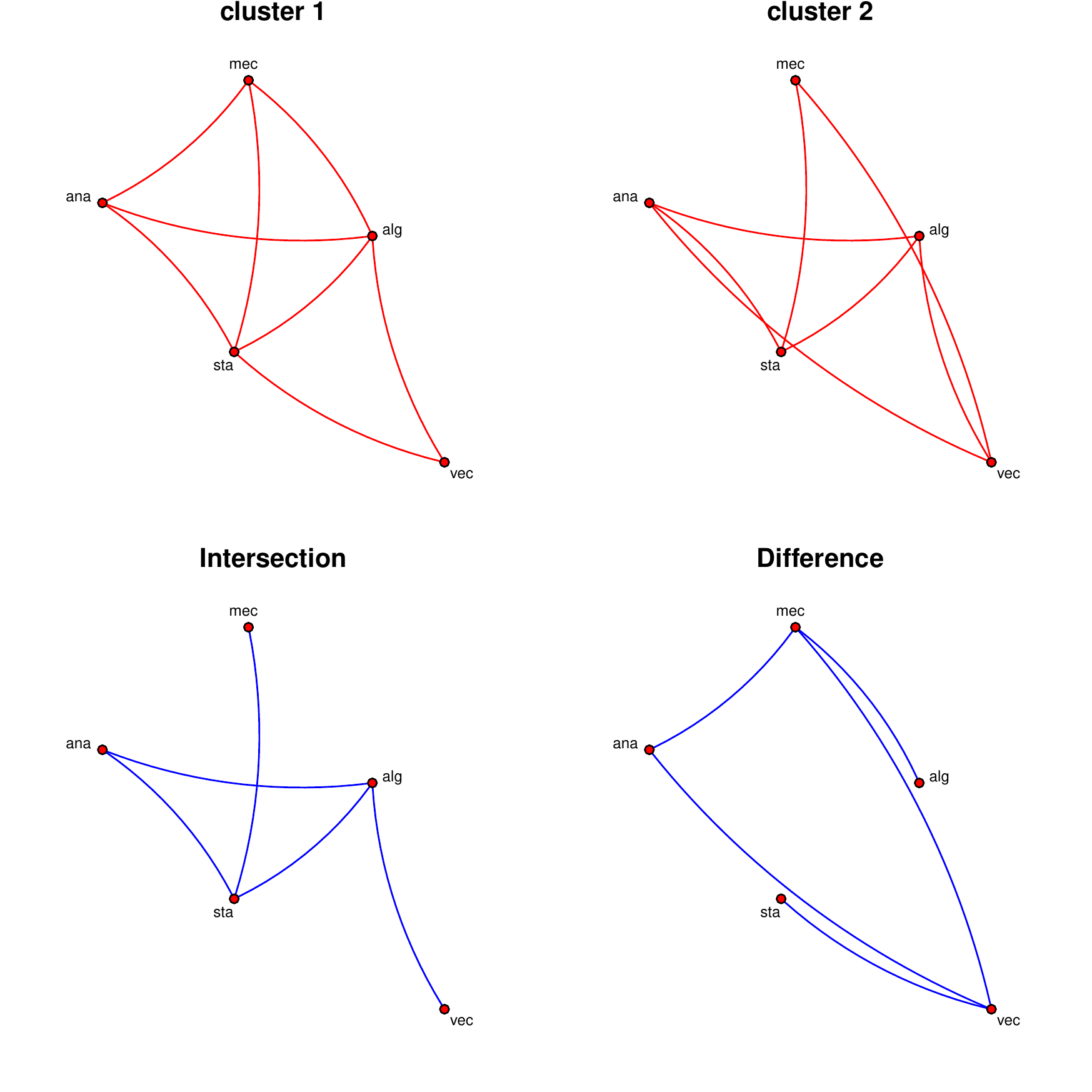}
	\caption{Graphical model of the 2 group of students}
	\label{scor}
\end{figure}
%%%%%%%%%%%%%%%%%%%%%%%%%%%%%%%%%%%%%%%%%%%%%%%%%%%%%%%%%%%%%%%%%%%%%%%%%%%%%%%%%%%%%%%%%%%%%%%%%%%%%%%%%%%%%%%%%%%%%%%%%%%%%%%%%%%

\begin{figure}[tb]
	\centering
		\includegraphics[height =10cm, width =14cm]{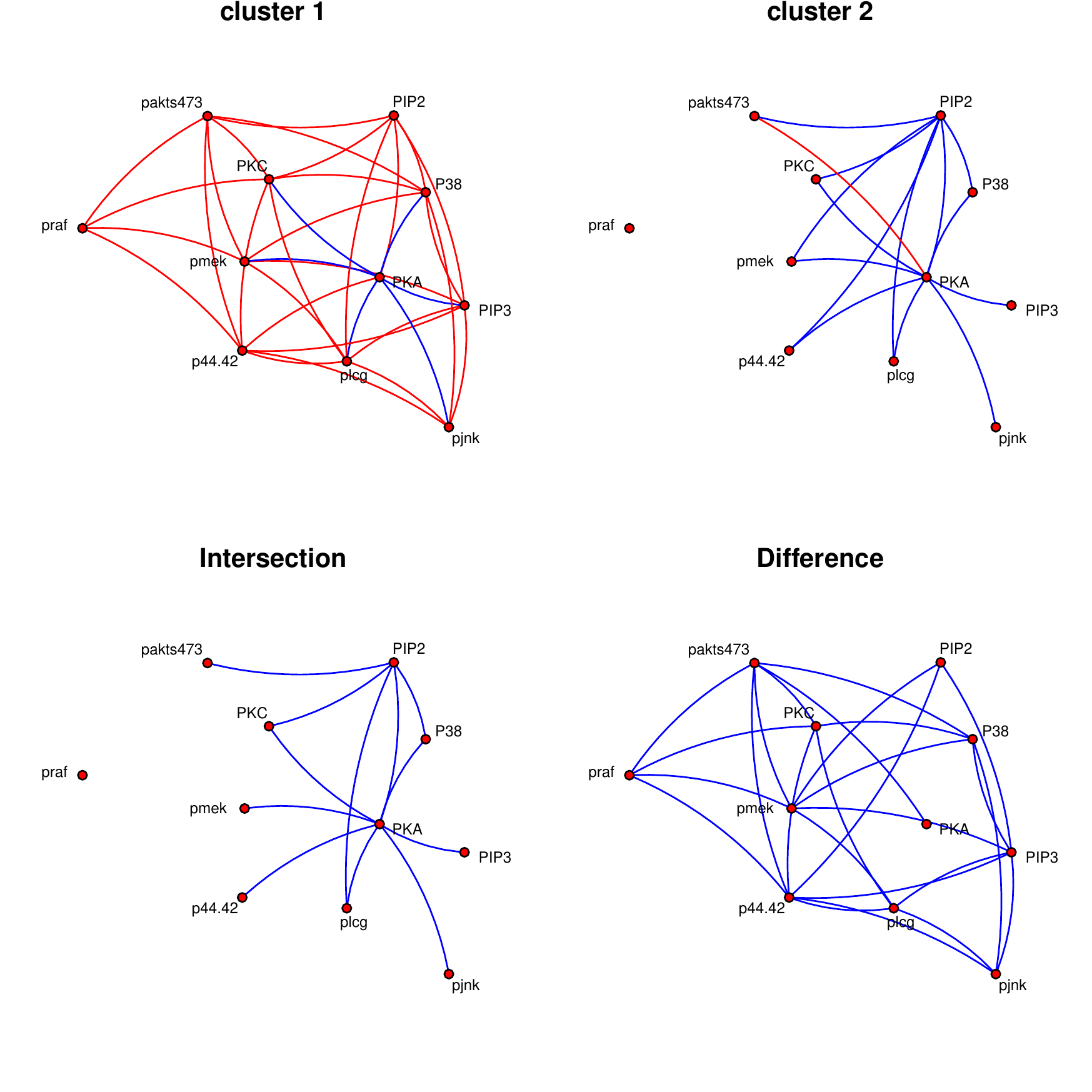}
	\caption{Graphical model of the CellSignal data based on first 500 cells}
	\label{signal}
\end{figure}

%%%%%%%%%%%%%%%%%%%%%%%%%%%%%%%%%%% %%%%%%%%%%%%%%%%%%%%%%%%%%%%%%%%%%%%%%%%%%%%%%%%%%%%%%%%%%%%%%%%%%%

%%%%%%%%%%%%%%%%%%%%%%%%%%%%%%%%%%%%%%%%%%%%%%   BEGING table for example 2  %%%%%%%%%%%%%%%%%%%%%%%%%%%%%%%%%%%%%%%%%%%%%%%%%%%%%%%%%%%%%%%%%%%%%%%%%%%%%%%%%%%%%%%%%%%%%%%%
\begin{table}[!ht]
\centering
\begin{tabular}{ccccccrrr} 
	\hline
\textbf{\em Model}  & Bias(AD)/Frobenuis      &  $F_{1}$	score	  & TP &  FP&  Precison	&  Recall \\

\hline \hline  \textbf{\em n=100 }  \\
																																											
 Penalized     &  \\  																													
   $\pi$			&    AD=0.0307       &\\ 																																
	$\Theta_{1}$	&  F= 3.4081 	&     	 0.3446  & 10& 32&   0.2380  & 1               &\\									                 																								       
	$\Theta_{2}$	&  F= 3.4018  &         0.3181  & 7& 29 &  0.1944& 0.875&                  &\\				
							
	\hline \hline  \textbf{\em n=300 }  \\
																																							
 Penalized   &  \\    																													
   $\pi$			&   AD=0.0356        &\\ 																																
	$\Theta_{1}$	& F=1.0539&                0.3703& 10& 34&  0.2272 & 1              &\\												               																								       
	$\Theta_{2}$	& F=0.8657 &              	0.3137& 8& 35&  0.1860& 1              &\\		
																				
		\hline \hline  \textbf{\em n=800 }  \\																																						
 Penalized   &  \\    																													
   $\pi$			&   AD=0.0669       &\\ 																																
	$\Theta_{1}$	& F=0.6419 &                   0.3703& 10&34& 0.2272& 1            &\\												               																								       
	$\Theta_{2}$	& F=0.7605 &                  0.3018 & 8& 37 &  0.1777& 1&                 &\\	
									
		\hline \hline  \textbf{\em n=2000 }  \\
																																									
 Penalized   &  \\    																													
   $\pi$			&   AD=0.0312        &\\ 																																
	$\Theta_{1}$	& F=0.5081 &                 0.3168& 8&34& 0.1882& 1              &\\												               																								       
	$\Theta_{2}$	& F=0.4150 &                 0.3636& 10& 35 &  0.2222& 1&                 &\\	
		
		\hline \hline  \textbf{\em n=5000 }  \\																																							
 Penalized   &  \\    																													
   $\pi$			&  AD=0.0065        						&\\ 																																
	$\Theta_{1}$	& F=0.2771 &                0.3703& 10&34& 0.2272& 1              &\\												               																								       
	$\Theta_{2}$	& F=0.2857 &               	0.2692  & 7& 37& 0.1590& 0.875&                 &\\

	\hline	
\end{tabular}
\caption{The  Bias(AD), Frobenuis norm (F), $F_{1}$ score, True Positive (TP), False Positive (FP), Precision and Recall of the PMLE for two-component mixture with $\lambda_{n}\propto \sqrt{ \log p}$.}
\label{rep2} 
\end{table}
%%%%%%%%%%%%%%%%%%%%%%%%%%%%%%%%%%%%END table for example 2%%%%%%%%%%%%%%%%%%%%%%%%%%%%%%%%%%%%%%%%%%%%%%%%%%%%%%%%%%%%%%%%%%%

Comparing the 2 examples, we observe that the choice of $\lambda_{n}$ plays a strong hand in  parameter  and graph selection consistency  of the resultant networks. The consistency properties of the PMLEs was achieved in both cases but  our results indicates  that the overall performance of the asymptotic behavior of $\lambda_{n}\propto\sqrt{n\log p}$  is more satisfactory. Even though both penalty decrease to $0$ as $n$ increases,  $\lambda_{n}\propto\sqrt{n\log p}$  decreases slower resulting in a relatively sparser networks  as compared to  $\lambda_{n}\propto\sqrt{\log p}$.

\subsection{Real-data Examples}

\subsubsection{Open/Closed Book Examination Data}
As a simple example of a data set to which mixture models may be applied, we consider the ``scor'' data. This data can be found in the ``bootstrap'' package in R; type help(``scor'') in R for more details. 

This is a data on $88$ students who took examinations in $5$ subjects namely mechanics, vectors, algebra, analysis, statistics. Some where with open book and others with closed book. Mechanics and vectors were with closed book. 

We fit a two-mixture component to the data with a strong indication that there are two-groups of students each with similar subjects interest. We applied our PMLE algorithm to the data with $\lambda$ based on scheme 1.  The pattern of interaction among the two groups were depicted in Figure \ref{scor}. The network differences as well as similarities are also shown.  The results indicates that $61\%$ of students have similar subjects interest while $39\%$  falls in other group of interest. In one group, we observe no interactions between mechanics and analysis nor statistics and vectors while in the other group there are interactions.

\subsubsection{Analysis of cell signalling data}
We consider the application of our method on the flow cytometry dataset (cell signalling data) of \cite{Sachs22042005}. The data set contains flow cytometry of $p=11$ proteins measured on $n=7466$ cells; from which we selected the first $500$ cells. The CellSignal data were collected after a series of stimulatory cues and inhibitory interventions with cell reactions stopped at $15$ min after stimulation by fixation, to profile the effects of each condition on the intracellular signaling networks. Each independent sample in the data set is made up of quantitative amounts of each of the $11$ phosphorylated molecules, simultaneously measured from single cells. 

We again fit a two-mixture component to the data. The result of applying our PMLE algorithm to the data set using the first scheme is shown Figure \ref{signal}.  The result indicates that $90\%$ of the observation falls in one component whiles $10\%$ falls in the other cluster. We also display the differences and similarities in the two components. The following proteins interaction were seen to be present in each of the two components: ($pakts473, PIP2$), ($PKC, PIP2$), ($PKA, pjnk$), ($pmek, PKA$) to mention but few. Differences in the interaction occur among the following proteins: ($pakts473, praf$), ($PIP2, p44.42$), ($PKC,  plog$); see Figure \ref{signal}  for details.

\section{Conclusion}
We have developed a penalized likelihood estimator for  Gaussian  graphical mixture models. We impose an $l_{1}$ penalty on the precision matrix with extra condition preventing the likelihood not to degenerate.  The estimates were efficiently computed through a penalized version of the EM-algorithm.  By taking advantage of the recent development in Gaussian graphical models, we have implemented our method with the use of the graphical lasso algorithm. We have provided consistency properties for the penalized maximum likelihood estimator in Gaussian Graphical Mixture Model.  Our results indicate a better performance in  parameter consistency as well as in graph selection consistency for $\lambda_{n}= O(\sqrt{n\log p})$. Another interesting situation is when the order $K$, the number of mixture components in the model is unknown. This is a more practical problem than the one we have discussed and probably involves  simultaneous model selection. Thus we intend to continue our research in this direction and present the results in a future paper. 

\bibliographystyle{elsarticle-harv}
\bibliography{myrefs}

%% Authors are advised to submit their bibtex database files. They are
%% requested to list a bibtex style file in the manuscript if they do
%% not want to use elsarticle-harv.bst.

%% References without bibTeX database:

% \begin{thebibliography}{00}

%% \bibitem must have one of the following forms:
%%   \bibitem[Jones et al.(1990)]{key}...
%%   \bibitem[Jones et al.(1990)Jones, Baker, and Williams]{key}...
%%   \bibitem[Jones et al., 1990]{key}...
%%   \bibitem[\protect\citeauthoryear{Jones, Baker, and Williams}{Jones
%%       et al.}{1990}]{key}...
%%   \bibitem[\protect\citeauthoryear{Jones et al.}{1990}]{key}...
%%   \bibitem[\protect\astroncite{Jones et al.}{1990}]{key}...
%%   \bibitem[\protect\citename{Jones et al., }1990]{key}...
%%   \harvarditem[Jones et al.]{Jones, Baker, and Williams}{1990}{key}...
%%

% \bibitem[ ()]{}

% \end{thebibliography}

\end{document}